\documentclass{article}

\usepackage[preprint]{neurips_2026}

\usepackage[utf8]{inputenc}
\usepackage[T1]{fontenc}
\usepackage{hyperref}
\usepackage{url}
\usepackage{booktabs}
\usepackage{amsfonts}
\usepackage{amsmath,amssymb,amsbsy,amsthm}
\usepackage{nicefrac}
\usepackage{microtype}
\usepackage{xcolor}
\usepackage{graphicx}
\usepackage{subcaption}
\usepackage{float}
\usepackage{longtable}
\usepackage{dcolumn,array}
\usepackage{cleveref}

\graphicspath{{figures/}}

\newtheorem{theorem}{Theorem}[section]

\newtheorem{remark}{Remark}[section]

\newcommand{\dm}{d_{model}}

\newcommand{\gls}[1]{\textsuperscript{\hyperref[gloss:#1]{G}}}

\newcommand{\eg}{\emph{e.g.}}
\newcommand{\ie}{\emph{i.e.}}

\newcommand{\R}{\mathbb{R}}

\newcommand{\vect}[1]{\boldsymbol{#1}}
\newcommand{\mat}[1]{\boldsymbol{#1}}

\title{Representational Capacity: Geometric Limits on Feature Representation in Transformer Language Models}

\author{%
  Alexander Guha \\
  Arizona State University \\
  \texttt{aguha6@asu.edu} \\
}

\begin{document}

\maketitle

\renewcommand{\thefootnote}{\relax}
\footnotetext{Code: \url{https://github.com/Alex-Guha/representational-capacity}}
\renewcommand{\thefootnote}{\arabic{footnote}}

\begin{abstract}
Model dimension ($d_{model}$) is a fundamental hyperparameter in transformer-based language models, yet its role in determining the geometric limits of feature representation remains under-explored.
Grounded in the Linear Representation and Superposition Hypotheses---which together propose that models encode features as near-orthogonal directions in the latent space---we develop a quantitative framework for estimating how many such directions a model's latent space can support.
We first establish the embedding matrix as a measurable proxy for the near-orthogonality constraints operating across the latent space, proposing that the boundary between meaningful token relationships and incidental similarity in the pairwise cosine similarity distribution provides a concrete estimate of the model's accepted deviation $\varepsilon$ from perfect orthogonality.
Applying this metric across dozens of open-source language models reveals two distinct classes: models with high $\varepsilon$ whose embeddings lack near-orthogonal structure, and models with low $\varepsilon$ that maintain strict near-orthogonality constraints.
We then show that the standard Johnson-Lindenstrauss lemma dramatically underestimates the packing efficiency of trained representations and derive an adjusted capacity formula in which the number of near-orthogonal directions depends on the ratio of vectors to dimensions ($k/d$) rather than the raw count alone---a single modification that reduces prediction error by two orders of magnitude with no additional free parameters.
Combining these results, we define \emph{representational capacity} as a quantitative upper bound on the number of distinguishable directions available for features and embeddings within a model's latent space.
The analysis reveals that capacity is exponentially sensitive to $\varepsilon$, and that larger models tend to favor tighter orthogonality constraints over maximizing raw capacity---a pattern compatible with several explanations (a stability--capacity trade-off, a ceiling on usable concepts, or confounds with overall model scale) that we leave open for future work.
\end{abstract}

\section{Introduction}\label{sec:introduction}

Model dimension ($d_{model}$) is one of the hyperparameters that controls a transformer-based language model's parameter count, and primarily determines the size of the embedding\gls{embeddings} and latent space\gls{latent} within the model\footnote{Definitions for terms marked with a superscript `G' can be found in the Glossary (Appendix~\ref{sec:glossary}).}.
In practice, $d_{model}$ is heuristically chosen to scale with the other hyperparameters, generally in powers of 2 to facilitate efficient GPU computation.

Naively, one might expect feature vectors of length $d_{model}$ to use each basis vector for a distinct feature.
For example, if $d_{model}=3$, the vector $[1, 0, 0]$ might represent ``Cat'', while $[0, 1, 0]$ represents ``Dog''---and $d_{model}$ would directly bound the number of features the model could work with.
In practice, neural networks have long been understood to use \emph{distributed representations}\gls{distributed}, in which each feature is represented by multiple basis dimensions being active simultaneously, and each basis dimension participates in representing many different features (\cite{bengio2014representationlearningreviewnew}).
This leads to \emph{polysemanticity}\gls{polysemanticity}, the phenomenon where individual neurons activate for multiple, seemingly unrelated inputs (\cite{olah2020zoom}).

A specific instantiation of distributed representations that has gained significant traction is the \emph{Linear Representation Hypothesis}\gls{lrh} (LRH).
Following from the introduction of the idea of linguistic regularities in the context of word embeddings (\ie~``King $-$ Man $+$ Woman $\approx$ Queen'') (\cite{kingManWomanQueen}), this idea purports that neural language models broadly tend to represent concepts and features as directions in the latent space.
Recent works have provided strong evidence for the existence of such linear feature directions in trained models.
\cite{interpretingLatentsWithSAE} employed Sparse Autoencoders (SAEs) to decompose language model activations into sparse, interpretable components, recovering feature directions including ``parts of individual names, especially last names'' and ``legal terms and court case references''.
Building on this, \cite{ScalingMonosemanticity} scaled the approach to Claude 3 Sonnet and extracted millions of monosemantic features---specific entities, code syntax, abstract concepts---including a feature direction for ``The Golden Gate Bridge'', and demonstrated that amplifying activations along such directions predictably steers model behavior.
\cite{park2024linearrepresentationhypothesisgeometry} complement these empirical findings with a theoretical analysis showing that causal interventions along specific directions can predictably manipulate model behavior, further solidifying the link between linear directions and conceptual representations.

Building on Linear Representations, the \emph{Superposition Hypothesis}\gls{superposition} accounts for polysemanticity by suggesting that neural networks leverage near-orthogonality\gls{nearortho} to represent more concepts than the number of available dimensions (\cite{toyModelsOfSuperposition}).
This idea is grounded in the Johnson-Lindenstrauss (JL) lemma\gls{jl} (\cite{johnsonLindenstraussLemma}).
As detailed in Appendix~\ref{sec:jl_lemma}, the lemma's guarantee of distance preservation implies that inner products between unit vectors are also preserved within a small accepted deviation $\varepsilon$, allowing exponentially many near-orthogonal directions to exist in high-dimensional space.
According to the Superposition Hypothesis, neural networks exploit this property by representing features as near-orthogonal directions in $\R^{\dm}$, allowing the number of representable concepts to grow exponentially relative to $\dm$.

Crucially, the SAE-based studies above not only demonstrate the existence of linear feature representations but also provide direct empirical evidence for superposition in the latent space.
\cite{ScalingMonosemanticity} extracted millions of interpretable features from Claude 3 Sonnet, a model whose latent dimension is orders of magnitude smaller than the number of recovered features.
Since these features are represented as directions in a space with far fewer dimensions than features, they must necessarily be arranged near-orthogonally---the geometric hallmark of superposition.
This establishes that near-orthogonality in the latent space is not merely a theoretical possibility but an observed property of trained models.

\paragraph{Contributions.}
This paper investigates the geometric constraints that govern how many features a transformer can represent within its $\dm$-dimensional latent space.
If models encode features as near-orthogonal directions as the Superposition Hypothesis proposes and SAE studies empirically support, then the number of such directions is bounded by geometric properties of the space: specifically, its dimension and the tolerance for deviation from perfect orthogonality.
Motivated by the relationship between tokenization and embedding structure (discussed in Section~\ref{sec:embeddings}), we analyze the similarity distribution of the embedding matrix as a method to estimate the accepted deviation $\varepsilon$ for near-orthogonality within a model's latent space.
At initialization, the embedding matrix maps orthogonal one-hot vectors into $\R^{\dm}$ via random weights, and because random vectors in high-dimensional space are near-orthogonal with high probability---the geometric property underlying the Johnson-Lindenstrauss lemma---the initial embeddings inherit this near-orthogonal structure.
Training modifies but largely preserves it: the trained distributions develop an extended right tail corresponding to lexical relationships (morphological variants of the same token) and semantic relationships (conceptually related tokens), while the bulk of unrelated token pairs remains tightly clustered near zero similarity.
The boundary between these meaningful relationships and incidental similarity---estimated as $\mu + 2\sigma$ of the distribution---provides a concrete, if heuristic, threshold for $\varepsilon$.
Applied across dozens of open-source models, this estimator reveals two distinct classes: high-$\varepsilon$ models that lack near-orthogonal embedding structure, and low-$\varepsilon$ models that maintain it.
We then show that the standard Johnson-Lindenstrauss bound dramatically underestimates the packing achieved by trained representations, and derive an empirically adjusted formula in which capacity depends on the ratio $k/d$ rather than $k$ alone---a single modification that reduces prediction error by two orders of magnitude with no additional free parameters.
Combining these, we define \emph{representational capacity}\gls{repcap} as a quantitative upper bound on the number of distinguishable directions available within the latent space, revealing that the available near-orthogonal directions constitute a shared resource among embeddings, unembeddings, and features, that capacity is exponentially sensitive to $\varepsilon$, and that larger models tend to favor tighter orthogonality over raw capacity.

\section{Embeddings}\label{sec:embeddings}

This section establishes embeddings as a measurable proxy for estimating the accepted deviation $\varepsilon$ for near-orthogonality\gls{nearortho} within a model's latent space\gls{latent}.
We characterize the post-training similarity distribution of the embedding matrix (including the lexical and semantic relationships that form its tail), propose an estimator for $\varepsilon$, and apply it across dozens of models to reveal two distinct classes.

\subsection{Tokenization and the Embedding Space}

Tokenization maps each of $V$ vocabulary tokens to a learned $\dm$-dimensional vector via an embedding matrix $\mat{E} \in \R^{V \times \dm}$\gls{embeddings}, equivalent to multiplying a one-hot vector $\vect{e}_i \in \R^V$ by $\mat{E}$ to obtain $\vect{x}_i = \mat{E}^\top \vect{e}_i$.
By construction, the input one-hot vectors are perfectly orthogonal: $\langle \vect{e}_i, \vect{e}_j \rangle = 0$ for all $i \neq j$.
Perfect orthogonality, however, requires a space with dimensionality at least equal to the number of vectors---$V$ dimensions for $V$ tokens---and since $\dm \ll V$ in practice (typical $V$ is 30,000--130,000 while $\dm$ ranges from 768--8,192), $\mat{E}$ necessarily projects these one-hot vectors into a much smaller space where strict orthogonality is no longer achievable.
At random initialization the resulting embeddings are near-orthogonal with high probability via the Johnson-Lindenstrauss lemma\gls{jl}, so the embedding matrix can be understood as a compressed representation of vocabulary space; as we will see, training largely preserves this structure.

These embeddings reside in $\R^{\dm}$, the same space occupied by all subsequent latent representations: the residual stream means embeddings, intermediate latents\gls{latents}, and features all coexist in $\R^{\dm}$ and are subject to the same geometric constraints.
We therefore hypothesize that the near-orthogonality observed in embeddings reflects properties of the broader latent space, with one qualification: as discussed in Appendix~\ref{sec:embd_unembd}, models with tied embedding and unembedding matrices exhibit different structural properties, suggesting their embeddings may not be representative.

\subsection{Post-Training Embedding Structure}

For each model, we compute the pairwise cosine similarity $\text{sim}(i,j) = \langle \vect{x}_i, \vect{x}_j \rangle / (\|\vect{x}_i\|\|\vect{x}_j\|)$ between all token embeddings.
Across many trained models, the resulting distributions are tightly centered near zero (Figure~\ref{fig:embeddings_sim_fixed}), closely resembling their random initialization; near-orthogonality is preserved through training.

\begin{figure}[htbp]
    \centering
    \includegraphics[width=0.9\textwidth]{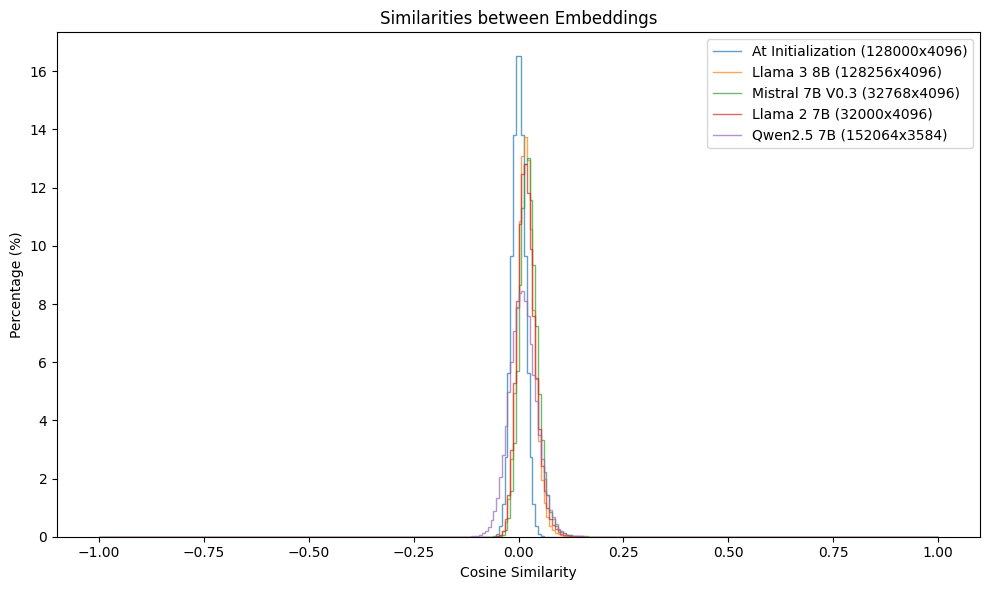}
    \caption{Distribution of pairwise cosine similarity between token embeddings across various models, demonstrating near-orthogonality.}
    \label{fig:embeddings_sim_fixed}
\end{figure}

This preservation is not coincidental: while next-token prediction does not explicitly incentivize near-orthogonality, it also does not require embeddings to collapse onto each other.
We hypothesize that the model preserves near-orthogonality as a means of maintaining a structured representational space: if embeddings \emph{lean toward}\gls{leaning} hidden feature directions, then maintaining near-orthogonality helps avoid interference between features (as suggested by the Linear Representation Hypothesis).

Two systematic deviations from random initialization are nonetheless visible on closer inspection (Figure~\ref{fig:embeddings_sim_zoomed}): the distributions are shifted slightly to the right of zero (an aversion to negative similarities, plausibly because negative pairwise similarities make the QKV projections work harder to produce useful query-key interactions), and they exhibit slightly asymmetric tails---the right tail extends further than the left, reflecting meaningful relationships between certain token pairs, which we examine next.

\begin{figure}[htbp]
    \centering
    \includegraphics[width=0.9\textwidth]{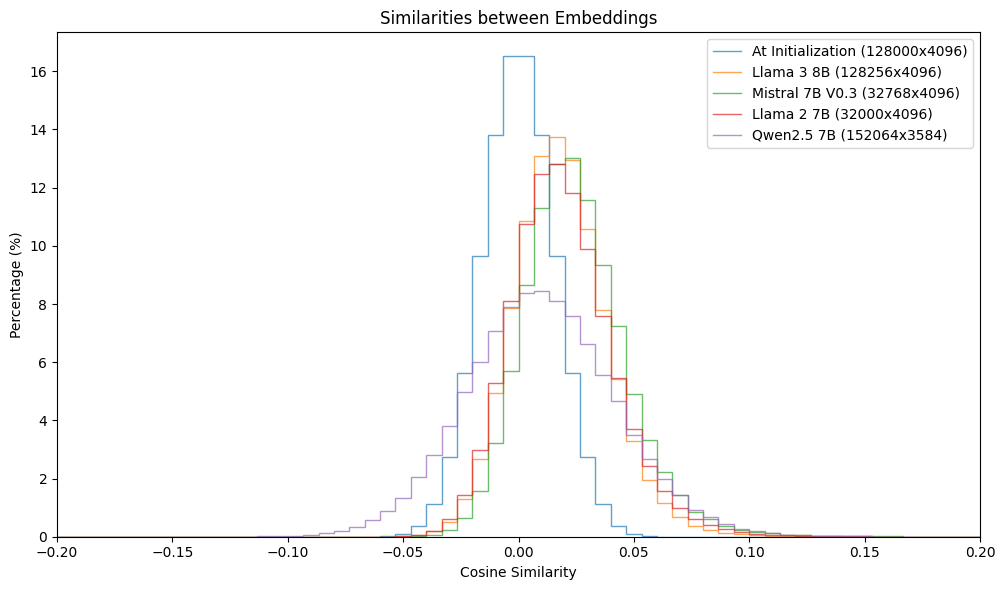}
    \caption{Zoomed view of embedding similarity distributions, revealing the right shift from zero and slightly asymmetric tails.}
    \label{fig:embeddings_sim_zoomed}
\end{figure}

\subsection{Lexical and Semantic Token Relationships}

The extended right tail in the similarity distributions corresponds to token pairs with genuine lexical or semantic relationships.
Distinguishing these from incidental similarity is essential: meaningful similarity should not count against orthogonality, while incidental similarity represents the model's tolerance for feature interference.

\begin{figure}[htbp]
    \centering
    \begin{subfigure}[b]{0.49\textwidth}
        \centering
        \includegraphics[width=\textwidth]{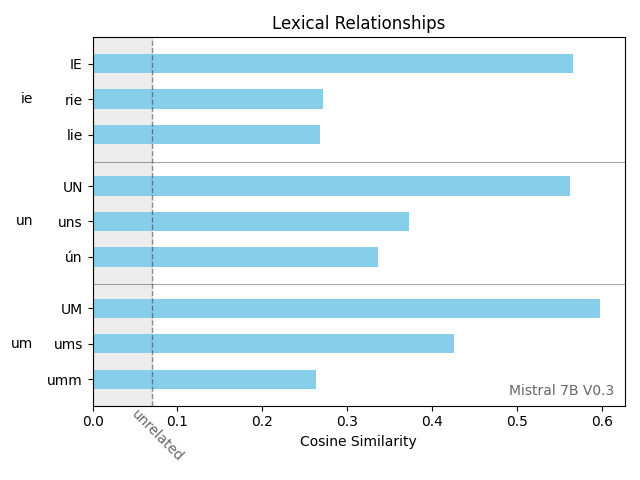}
        \caption{Lexical relationships}
    \end{subfigure}
    \begin{subfigure}[b]{0.49\textwidth}
        \centering
        \includegraphics[width=\textwidth]{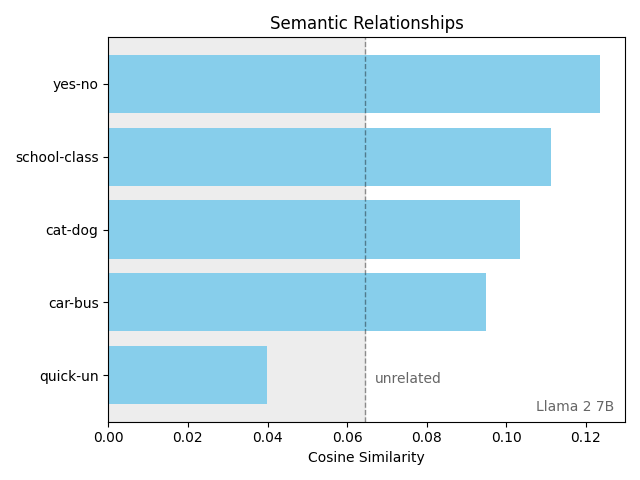}
        \caption{Semantic relationships}
    \end{subfigure}
    \caption{Examples of lexical and semantic relationships in token embeddings. Lexical relationships (a) show tokens with shared surface forms; semantic relationships (b) show conceptually related tokens without lexical overlap. The final pair (quick--un) serves as an unrelated baseline.}
    \label{fig:token_relationships}
\end{figure}

Prominent lexical relationships can be identified by examining nearest-neighbor structure.
For each token $i$, we compute its nearest neighbor $\text{nn}(i) = \operatorname*{argmax}_{j \neq i} \text{sim}(i, j)$, and call a token $k$ a \emph{primary} token if it appears frequently as the nearest neighbor of others: $|\{i : \text{nn}(i) = k\}| > \tau$ for some threshold $\tau$.
These primary tokens (Figure~\ref{fig:token_relationships}a) typically have upwards of 10 secondary tokens closer to them than to any other embedding, and they anchor clusters of capitalization or morphological variants (\eg~``cat''/``Cat''/``cats'') that account for much of the right tail.
Semantic relationships (Figure~\ref{fig:token_relationships}b) are more subtle: rather than vectors leaning toward each other directly, conceptually related tokens (\eg~``cat'' and ``dog'') appear to lean independently toward shared feature directions\gls{feature} (\eg~``animal''), yielding more limited direct similarity.
This distinction remains speculative.

\begin{figure}[htbp]
    \centering
    \includegraphics[width=0.85\textwidth]{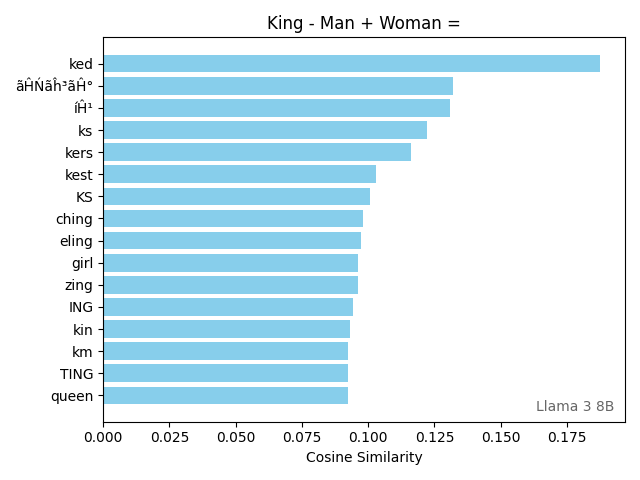}
    \caption{The top 40 most similar embeddings to the vector $\vect{x}_{\text{king}} - \vect{x}_{\text{man}} + \vect{x}_{\text{woman}}$, excluding tokens containing ``king'', ``woman'', or ``women''. The presence of ``queen'' and ``girl'' demonstrates semantic structure encoded as directions in the embedding space.}
    \label{fig:king_man_woman_relationship}
\end{figure}

The classic analogy ``King $-$ Man $+$ Woman $\approx$ Queen'' (\cite{kingManWomanQueen}) provides further evidence for this semantic structure.
Constructing a query $\vect{q} = \vect{x}_{\text{king}} - \vect{x}_{\text{man}} + \vect{x}_{\text{woman}}$ and retrieving its nearest neighbors among the embedding matrix surfaces both ``queen'' and ``girl'' within the top 40 (Figure~\ref{fig:king_man_woman_relationship}).
That ``queen'' does not rank first likely reflects the imprecision of vector arithmetic as a navigation tool through semantic feature directions; the mere presence of these semantically appropriate tokens corroborates the LRH for decoder-based models---semantic relationships are encoded as directions that can be meaningfully combined.

\subsection{Estimating and Generalizing Accepted Deviation \texorpdfstring{$\varepsilon$}{ε}}

Token pairs beyond the boundary of meaningful similarity are near-orthogonal largely as a consequence of high-dimensional geometry, and the boundary itself provides a natural estimate for the model's tolerance of incidental similarity between unrelated directions.
We propose estimating $\varepsilon$ as approximately $\mu + 2\sigma$ of the pairwise similarity distribution, where $\mu$ and $\sigma$ are its mean and standard deviation.
This threshold is a motivated heuristic rather than a formally justified bound, and is intended to mark the transition from the bulk of unrelated similarities to the relationship-driven tail.

This estimator is grounded in embedding geometry, and extending it to features warrants care: the embedding matrix is a fixed set of learned vectors, while features are directions that may be activated to varying degrees during inference.
The connection between the two is nonetheless stronger than mere shared dimensionality.
The weight matrices in attention and MLP layers---which perform the only direct interaction with latent vectors outside of normalization---function as collections of feature probes.
Each weight vector $\vect{w}_i$ extracts information from a latent $\vect{h}$ via the inner product
\begin{equation}
    \vect{h} \cdot \vect{w}_i = \|\vect{h}\| \|\vect{w}_i\| \cos\theta,
\end{equation}
which is fundamentally a scaled similarity measurement on the same $\R^{\dm}$ space, meaning weight vectors must learn directions that correspond to the features they probe.
Individual weight vectors need not align perfectly with any single feature direction---they may represent linear combinations of features, making them difficult to interpret in isolation even when the underlying feature space is well-structured.
The outputs of these transformations are then combined with the residual stream by addition, preserving the underlying geometric structure.
Nonlinear activations applied after linear transformations can be understood as response functions operating on these similarity scores rather than as modifications to the underlying geometry: directions must first be linearly distinguishable via dot product before nonlinear transformations can selectively act on them.
This framing does not preclude features that emerge from multi-layer nonlinear composition, but it does establish that each individual layer's interaction with the latent space is geometrically constrained by the same near-orthogonality measurement that governs embeddings.
Embeddings additionally offer a practical advantage as a fixed, measurable quantity, unlike intermediate representations which vary with input.
This generalization nonetheless remains an empirical hypothesis: layer normalization and the specific learned weight configurations could still impose different effective constraints on intermediate representations.

\subsection{Two Classes of Models}

To validate $\varepsilon$ as a meaningful metric, we apply it across dozens of language models and compare against $\dm$.
The analysis reveals two distinct classes (Figure~\ref{fig:model_dim_vs_ortho}; per-model values in Appendix~\ref{sec:models_analyzed}, Table~\ref{tab:model_repr_cap}), suggesting $\varepsilon$ captures a genuine structural property.

\begin{figure}[htbp]
    \centering
    \begin{subfigure}[b]{0.49\textwidth}
        \centering
        \includegraphics[width=\textwidth]{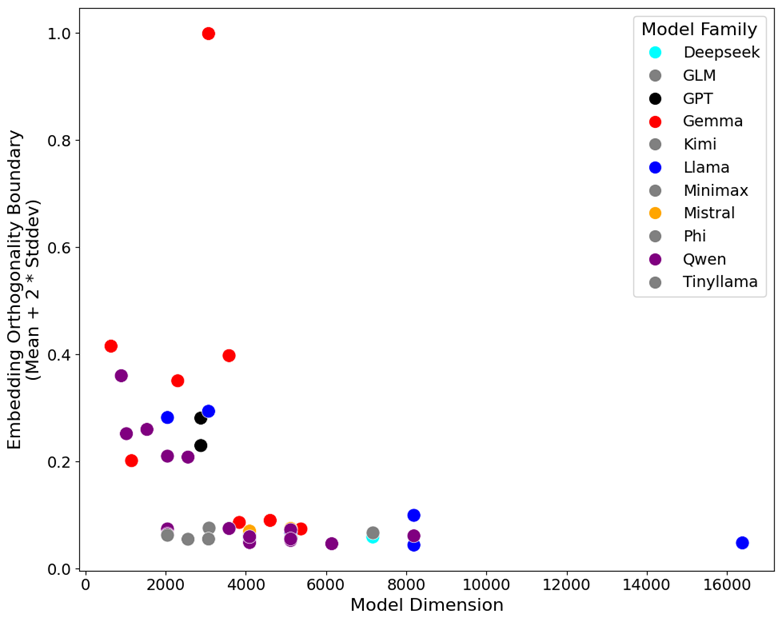}
        \caption{Two classes of models}
    \end{subfigure}
    \begin{subfigure}[b]{0.49\textwidth}
        \centering
        \includegraphics[width=\textwidth]{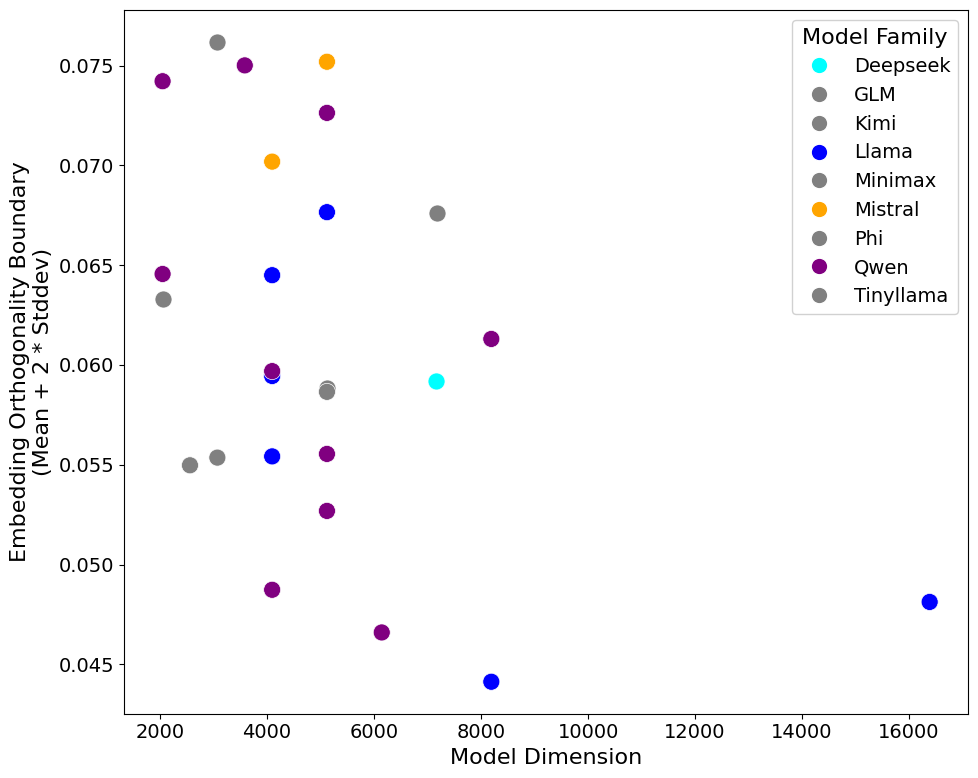}
        \caption{The second class, zoomed in}
    \end{subfigure}
    \caption{Model dimension compared to estimated $\varepsilon$ across various language models, revealing two distinct classes.}
    \label{fig:model_dim_vs_ortho}
\end{figure}

The first class consists of models with generally lower $\dm$ and wide similarity distributions, yielding $\varepsilon > 0.2$.\footnote{An extreme outlier is Gemma~7B, whose $\varepsilon$ approaches 1. We do not investigate the cause here, but it appears more consistent with idiosyncratic training or architectural factors than with a deliberate design choice.}
These models exhibit similarity-distribution means far from zero (typically 0.1--0.9), the clearest evidence that they do not leverage superposition at the embedding level: when average pairwise similarity is substantially positive, embeddings cannot serve as distinguishable, near-orthogonal directions.
Nearly all models in this class have tied embedding and unembedding matrices (see Appendix~\ref{sec:embd_unembd}).

The second class consists of models with generally higher $\dm$ and tight distributions clustered around zero ($\varepsilon < 0.1$, most around 0.09 or less), consistent with active use of superposition\gls{superposition} to pack many distinguishable directions into the latent space.
The cause of the division is not entirely clear: tying correlates with the first class but does not fully explain it, since a few tied models appear in the second class and some untied ones in the first.
The remainder of the paper focuses on the second class, where $\varepsilon$-based capacity bounds are meaningful.

\subsection{Section Summary}

This section established embeddings as a measurable proxy for the near-orthogonality constraints operating in a model's latent space.
At initialization, embedding matrices map orthogonal one-hot vectors into a near-orthogonal configuration in $\R^{\dm}$, since random vectors in high-dimensional space are approximately orthogonal with high probability.
Training modifies this initial structure only modestly: the distributions shift right (avoiding negative similarities) and develop an extended right tail (encoding lexical and semantic relationships), but the fundamental near-orthogonality persists.
The boundary between the main distribution and the relationship-driven tail---estimated as approximately $\mu + 2\sigma$---provides a concrete threshold for the accepted deviation $\varepsilon$.
Under the assumption that embeddings and features coexist in the same $\dm$-dimensional space and are subject to similar geometric constraints, this tolerance plausibly extends to the feature directions that the model learns to use, though this remains a hypothesis rather than a proven relationship.

Applying this metric across dozens of models reveals two distinct classes: high-$\varepsilon$ models that do not appear to use superposition at the embedding level, and low-$\varepsilon$ models that maintain tight near-orthogonality.
Embeddings are therefore not merely a lookup table for token representations.
They are a compressed representation of vocabulary space that \emph{leans toward}\gls{leaning} hidden feature directions, placing tokens in superposition\gls{superposition} alongside whatever features the model has learned.
Both embeddings and features draw from the same pool of near-orthogonal directions in $\R^{\dm}$, subject to the same geometric constraints quantified by $\varepsilon$.
The following section uses this $\varepsilon$ estimate to derive a quantitative bound on representational capacity.

\section{Representational Capacity}\label{sec:repr_cap}

We now convert $\varepsilon$ into a quantitative bound on \emph{representational capacity}\gls{repcap}---an upper bound on the number of near-orthogonal directions available for features, embeddings, and other learned representations within a model's latent space.
The Johnson-Lindenstrauss (JL) lemma\gls{jl} provides a natural starting point, but as we show, its random-projection assumption dramatically underestimates the packing achieved by trained models.
We derive an empirically adjusted relationship that better matches the geometry of optimized representations and use it to define and compute representational capacity.

\subsection{Near-Orthogonal Directions as a Shared Resource}\label{sec:shared_resource}

Throughout this section we use $k$ for the number of near-orthogonal directions and $d = \dm$ for the latent dimension, consistent with the JL formulation in Appendix~\ref{sec:jl_lemma}.

Near-orthogonal directions in $\R^{\dm}$ serve multiple purposes within a transformer model.
Features---the conceptual units the model has learned to represent---occupy directions in this space, as described by the Linear Representation Hypothesis\gls{lrh}.
Token embeddings and unembeddings each also require directions: the embedding matrix maps $V$ tokens into $\R^{\dm}$, and the unembedding matrix (when untied) maps latent representations back to logits over the vocabulary.
For models with separate embedding and unembedding matrices, a rough estimate of the total near-orthogonal directions required is
\begin{equation}\label{eq:shared_pool}
    k \approx 2V + k_{\text{features}},
\end{equation}
where $V$ is the vocabulary size and $k_{\text{features}}$ is the number of feature directions.
For tied models embeddings and unembeddings share directions, reducing this to approximately $k \approx V + k_{\text{features}}$.
As noted in Section~\ref{sec:embeddings}, however, most tied models fall into the high-$\varepsilon$ class and may not leverage near-orthogonality at the embedding level; the few tied models in the low-$\varepsilon$ class (\eg~Gemma) should be considered with this adjustment.

While we cannot directly measure $k_{\text{features}}$, we can work in reverse: given $\dm$ and $\varepsilon$, estimate the total $k$ available, and subtract the vocabulary contribution to bound the remaining capacity for features.

\subsection{The Johnson-Lindenstrauss Framework and Its Limits}\label{sec:jl_framework}

The JL lemma (Appendix~\ref{sec:jl_lemma}) guarantees that for $k$ unit vectors there exists a linear map to $\R^d$ preserving inner products within $\pm\varepsilon$, provided
\begin{equation}\label{eq:jl_bound}
    d \ge C \cdot \frac{\ln k}{\varepsilon^2}, \qquad \text{equivalently} \qquad k \le \exp\!\left(\frac{d \cdot \varepsilon^2}{C}\right),
\end{equation}
where $C$ depends on the construction. This implies exponential growth of $k$ with $d$ and quadratic growth with $\varepsilon$---the geometric basis for the Superposition Hypothesis\gls{superposition}.

\paragraph{The problem with random vectors.}
Applied to trained embeddings, however, this bound is wildly off.
Using the best proven $C = 8$ for Llama~2~7B ($\dm = 4096$, $\varepsilon = 0.0645$) yields
\begin{equation}
    k \le \exp\!\left(\frac{4096 \times 0.0645^2}{8}\right) = \exp(2.130) \approx 8.4
\end{equation}
near-orthogonal directions, against an actual vocabulary of 32,000.
To check whether $C = 8$ is simply too conservative, we empirically tighten it.
For each $(k, d)$ we generate $T = 1000$ independent trials of $k$ random unit vectors and record the best achievable worst-case similarity:
\begin{equation}
    \varepsilon^*_{\text{random}}(k,d) = \min_{t \in \{1,\ldots,T\}} \max_{i \neq j} \lvert\text{sim}(\vect{v}_i^{(t)}, \vect{v}_j^{(t)})\rvert.
\end{equation}
Fitting \eqref{eq:jl_bound} to this data yields $C \approx 3.029$ with excellent fit ($R^2 = 0.9985$, MAPE $1.3\%$, NRMSE $0.9\%$; Figure~\ref{fig:jl_random_fit}).
Even with this tightened constant, Llama~2~7B is bounded at only $\exp(4096 \times 0.0645^2 / 3.029) \approx 277$ directions---still nearly two orders of magnitude short of its 32,000-token vocabulary.
The issue is the assumption rather than the constant: trained embeddings are not random projections, but the result of gradient-based optimization that has discovered arrangements packing far more near-orthogonal directions than random chance allows.

\begin{figure}[htbp]
    \centering
    \includegraphics[width=0.65\textwidth]{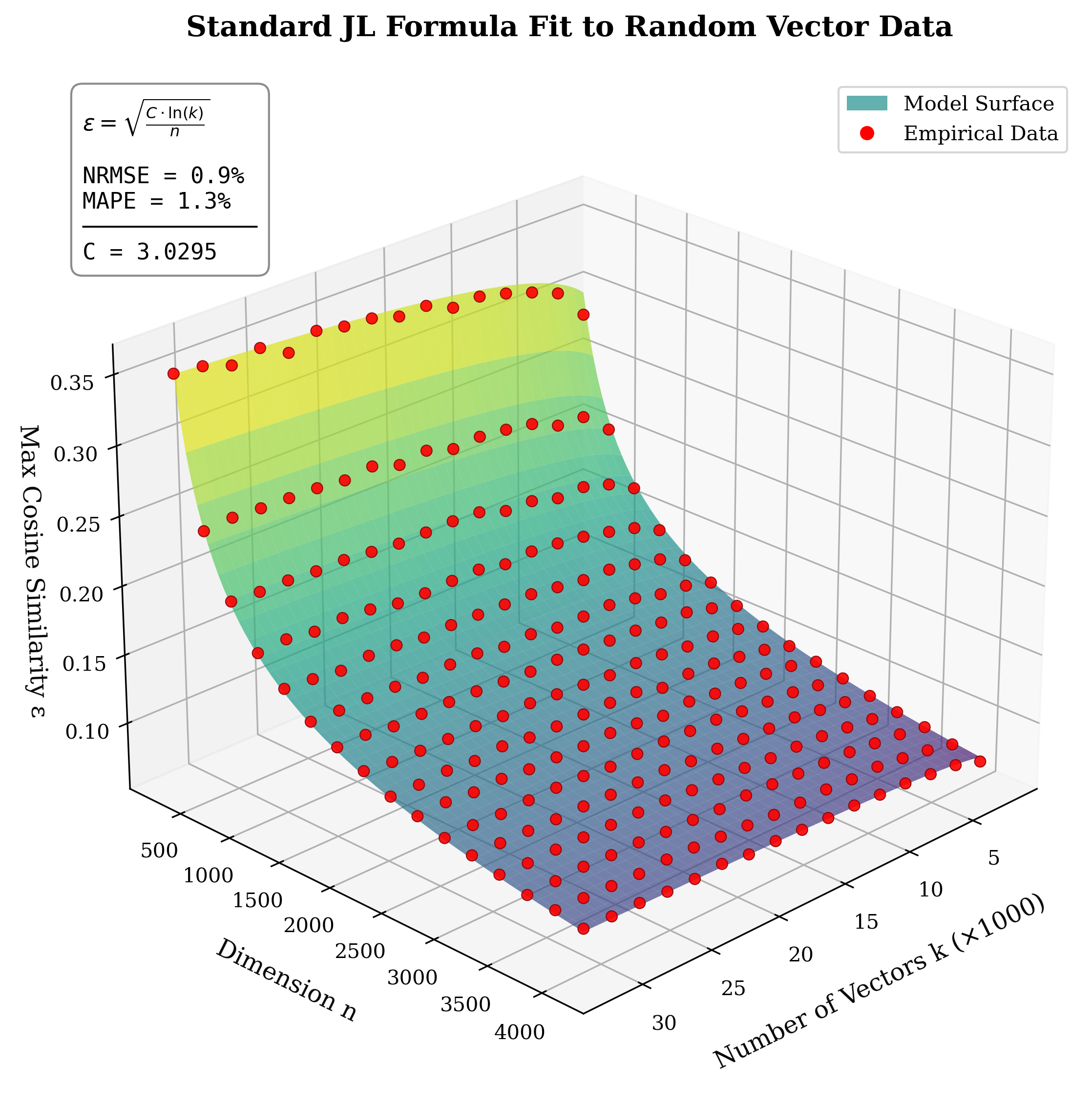}
    \caption{The standard JL relationship $\varepsilon = \sqrt{C \cdot \ln(k) / d}$ fitted to empirically generated random vector data, yielding $C \approx 3.029$ with $R^2 = 0.9985$. Even this tightened constant cannot account for the packing achieved by trained embeddings.}
    \label{fig:jl_random_fit}
\end{figure}

\subsection{An Adjusted Relationship for Optimized Vectors}\label{sec:optimizing}

To characterize what optimized vectors achieve, we initialize $k$ random unit vectors in $\R^d$ and minimize a high-exponent penalty $\mathcal{L} = \sum_{i \neq j} |G_{ij}|^p$ on the off-diagonal of the Gram matrix $\mat{G} = \mat{V}^\top\mat{V}$ ($p \in [40, 60]$, selected per scale; Adam optimizer (\cite{adam}); 5{,}000 steps).
This implicitly minimizes the maximum pairwise similarity, mimicking the tight packing that gradient descent produces during training.
We sweep $d \in \{32, 64, 128, 256, 512, 768, 1024, 1536, 2048, 2560, 3072, 3584, 4096\}$ and $k$ from 2{,}000 to 32{,}000, incrementing by 2{,}000 up to 8{,}000 for all dimensions and by 4{,}000 from 8{,}000 to 32{,}000 for $d \ge 1536$ (the coarser step reflecting the increased cost of optimizing larger configurations), and record $\varepsilon^* = \max_{i \neq j} |G_{ij}|$.

Refitting only the constant $C$ in \eqref{eq:jl_bound} to this optimized data gives a poor fit (MAPE $\approx 975\%$): the standard JL surface is too flat to track the empirical curve (Figure~\ref{fig:jl_vs_new_comparison}, left).
Among many functional-form modifications we tested, replacing $\ln(k)$ with $\ln(k/d)$ produced by far the best alignment:
\begin{equation}\label{eq:new_relationship}
    \varepsilon = \sqrt{\frac{C \cdot \ln(k/d)}{d}}, \qquad \text{equivalently} \qquad k \le d \cdot \exp\!\left(\frac{d \cdot \varepsilon^2}{C}\right).
\end{equation}
Capacity now depends on the \emph{ratio} of vectors to dimensions, not just the raw count.
With a single free parameter ($C \approx 1.293$), this fit yields $R^2 = 0.9984$, MAPE $7.9\%$---a $123\times$ MAPE reduction over standard JL on the same data, with no extra parameters.
The factor of $d$ multiplying the exponential is what allows optimized packing to dwarf what random projections can achieve.

\begin{figure}[htbp]
    \centering
    \includegraphics[width=0.95\textwidth]{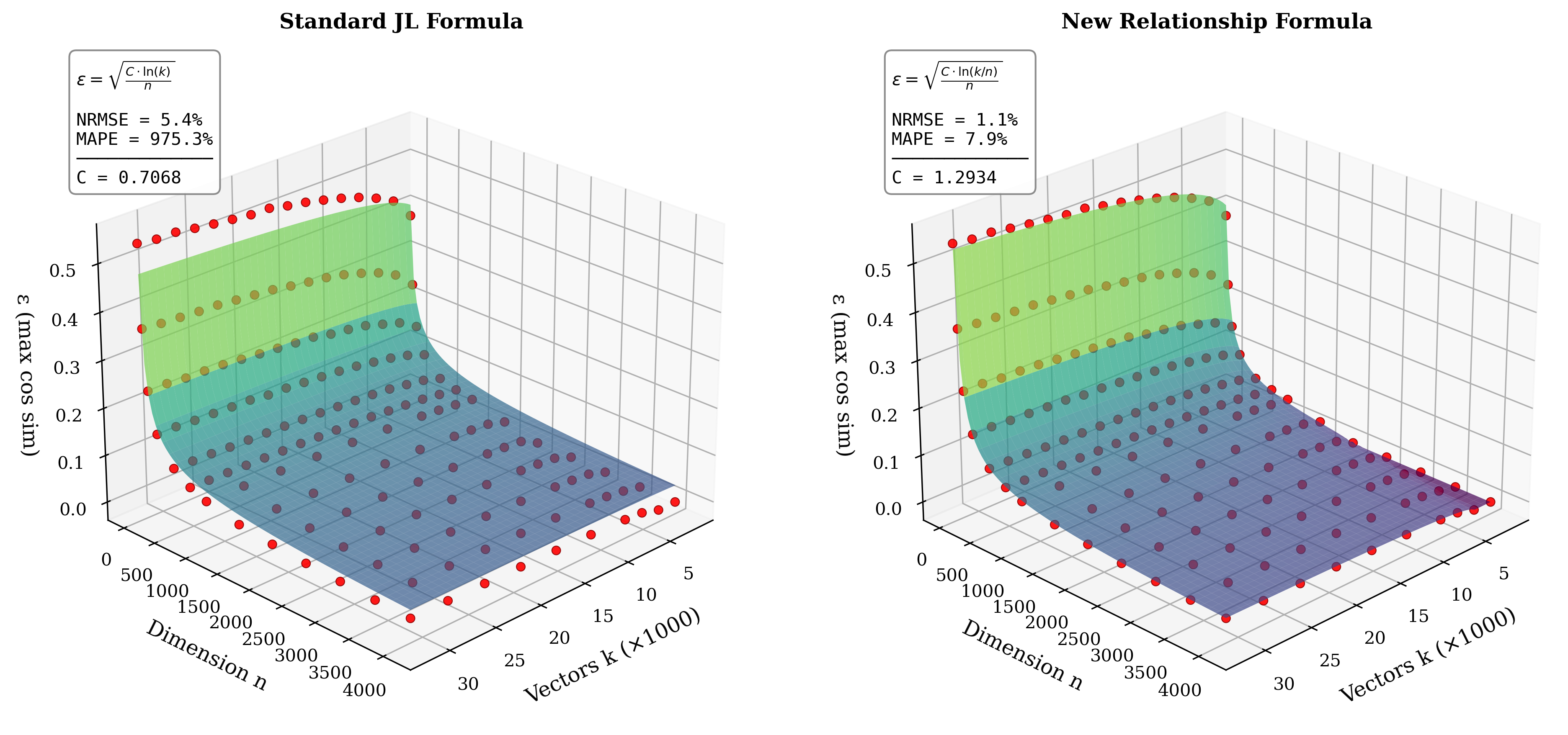}
    \caption{Standard JL formula (left) vs.\ the adjusted relationship (right) fitted to optimized vector data. Both have one free parameter; the adjusted form fits dramatically better. Red points: empirical data.}
    \label{fig:jl_vs_new_comparison}
\end{figure}

A fully parameterized variant $\varepsilon = \sqrt{C \cdot \ln(k^a/d)^b / d^c}$ confirms the structure: fitting yields $a \approx 1.07$, $c \approx 0.97$ (validating $k/d$ and $1/d$ scaling), with only modest gains in fit quality ($R^2 = 0.9998$, MAPE $5.4\%$; Figure~\ref{fig:full_parameterized_fit}).
Rearranging for $k$ gives
\begin{equation}\label{eq:k_from_full}
    k \le \left( d \cdot \exp\!\left( \left(\frac{d^c \cdot \varepsilon^2}{C}\right)^{1/b} \right) \right)^{1/a}.
\end{equation}
We use this fully parameterized form ($C = 0.458$, $a = 1.067$, $b = 1.447$, $c = 0.972$) for the per-model estimates in Appendix~\ref{sec:models_analyzed}, but the single-parameter \eqref{eq:new_relationship} captures the essential structure.

\begin{figure}[htbp]
    \centering
    \includegraphics[width=0.65\textwidth]{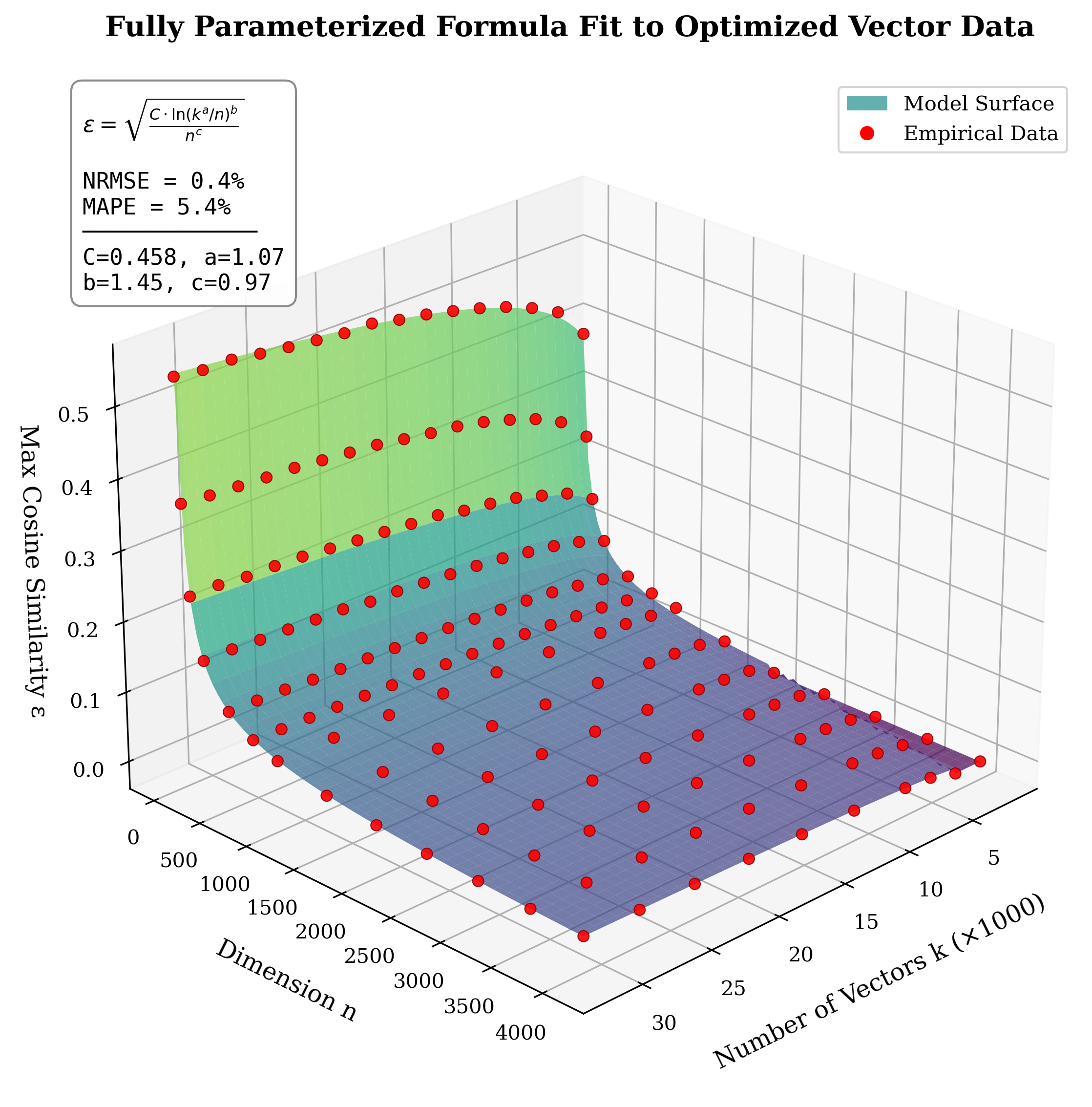}
    \caption{The fully parameterized formula $\varepsilon = \sqrt{C \cdot \ln(k^a/d)^b / d^c}$ fitted to optimized vector data, yielding $R^2 = 0.9998$. The modest improvement over the single-parameter form~\eqref{eq:new_relationship} confirms that the $k/d$ ratio is the essential structural insight rather than the extra degrees of freedom.}
    \label{fig:full_parameterized_fit}
\end{figure}

\subsection{Defining Representational Capacity}\label{sec:defining_capacity}

We define the model's \emph{representational capacity} as the upper bound on $k$ given $\dm$ and $\varepsilon$ via the adjusted relationship above.
For Llama~2~7B ($\dm = 4096$, $\varepsilon = 0.0645$) the fully parameterized form gives $k \le 3.99 \times 10^7$---roughly 40 million near-orthogonal directions, dramatically higher than the JL estimate of $\sim 277$, and comfortably accommodating both the 32,000-token vocabulary (consuming $\sim 64{,}000$ directions across embeddings and unembeddings) and the millions of features SAE studies have begun to extract.

Two caveats are essential.
First, capacity bounds geometric possibility, not realized usage: that 40 million directions \emph{can} exist does not mean the model has learned that many features, nor that its computational architecture (depth, attention heads, MLP width) could effectively utilize them.
Second, the $\varepsilon$ estimate originates from embedding geometry, and its extension to feature geometry remains a hypothesis (cf.\ Section~\ref{sec:embeddings}).
Representational capacity is therefore most useful as a \emph{relative} metric: a model with capacity $10^8$ has fundamentally more room for features than one with capacity $10^6$, regardless of how many features either actually uses.

\subsection{Capacity Across Model Classes}\label{sec:capacity_classes}

The two-class split from Section~\ref{sec:embeddings} extends to capacity.
For Class 1 models ($\varepsilon > 0.2$) the framework does not meaningfully apply: their embeddings lack near-orthogonal structure in the first place, so the formula yields vacuous bounds (\eg~$k \le 2 \times 10^{20}$ at $\dm = 2048, \varepsilon = 0.25$).
It is possible Class 1 models leverage near-orthogonality at the feature level despite their embeddings showing no such structure, but our embedding-based analysis cannot resolve this.
We focus the remaining analysis on Class 2.

For Class 2 models ($\varepsilon < 0.1$), capacities range from $\sim 10^5$ up to $\sim 10^{10}$--$10^{11}$ across the models analyzed (Appendix~\ref{sec:models_analyzed}, Table~\ref{tab:model_repr_cap}), comfortably accommodating their vocabularies.\footnote{Llama~2~70B is a notable exception, with $\varepsilon = 0.09962$ landing it at $\sim 10^{15}$; this likely reflects idiosyncratic training or architectural factors.}
The highest $\varepsilon$ we observe within Class 2 is 0.09, with no models approaching the 0.1--0.2 gap from below; this clustering suggests that models may not benefit from, or perhaps cannot sustain, deviations much larger than this threshold while maintaining the structure required for effective superposition.
Within this regime, $\varepsilon$ exerts a stronger pull on capacity than $\dm$: at fixed $\varepsilon = 0.09$, increasing $\dm$ from 2048 to 3072 gives a 30$\times$ capacity boost ($2.0\!\times\!10^7 \to 6.0\!\times\!10^8$), but at fixed $\dm = 8192$, raising $\varepsilon$ from 0.05 to 0.09 gives a six-orders-of-magnitude jump ($2.4\!\times\!10^8 \to 2.0\!\times\!10^{14}$).
The exponential dependence on $\varepsilon^2$ makes overlap tolerance, not dimension, the dominant lever.

Despite this leverage, observed models do not fully exploit it.
Figure~\ref{fig:model_dim_vs_ortho}b in Section~\ref{sec:embeddings} shows a negative correlation between $\dm$ and $\varepsilon$ within Class 2: larger models maintain tighter near-orthogonality.\footnote{We do not present a separate plot of capacity against $\dm$ or $\varepsilon$: since capacity is a deterministic function of the two, such a figure would carry no information beyond Figure~\ref{fig:model_dim_vs_ortho}b.}
This implies a trade-off between raw capacity and internal stability---larger $\varepsilon$ admits more features but more interference, while smaller $\varepsilon$ sacrifices capacity for more reliable feature retrieval---and suggests that as models scale, they favor stability over capacity maximization.
The trend rests on limited data---only a handful of analyzed models have $\dm > 6000$, making it difficult to assess whether it continues at larger scales---and the causal mechanism remains unclear.
Several explanations are plausible: there may be a practical ceiling on useful capacity beyond which additional feature directions provide diminishing returns; tighter orthogonality may be \emph{necessary} at higher $\dm$ to maintain internal stability regardless of capacity benefits; or tighter $\varepsilon$ may simply correlate with overall model size, which tends to increase alongside $\dm$ in practice.
Disentangling these factors would require controlled experiments that vary $\dm$ while holding other architectural choices constant, an expensive undertaking that remains beyond the scope of this work.

\subsection{Summary}\label{sec:repr_cap_summary}

The standard JL lemma, designed for random projections, dramatically underestimates the packing efficiency of learned representations: even with an empirically-tightened constant it bounds Llama~2~7B to fewer than 300 directions, far short of its 32,000-token vocabulary.
Replacing $\ln(k)$ with $\ln(k/d)$ in the JL relationship---a single-parameter modification motivated by direct optimization of vector arrangements---reduces prediction error by two orders of magnitude (MAPE $975\% \to 8\%$) and yields capacity bounds consistent with what is observed in trained models.
Applied across models, representational capacity is best understood as a \emph{relative} metric bounding geometric possibility rather than learned utilization: it reveals that Class 1 models fall outside the framework, that Class 2 capacities span ten orders of magnitude, and that overlap tolerance $\varepsilon$ is the dominant lever, while empirically larger models trade raw capacity for tighter near-orthogonality.
\section{Conclusion}\label{sec:conclusion}

\paragraph{Summary.}
This paper investigated the geometric constraints governing how many features a transformer-based language model can represent within its $\dm$-dimensional latent space, proceeding in four phases across two chapters.
\emph{In Section~\ref{sec:embeddings}}, we first established the embedding matrix as a measurable proxy for the near-orthogonality constraints operating across the latent space, and used the boundary between the bulk of the cosine-similarity distribution and its right tail---estimated as $\mu + 2\sigma$---as a concrete threshold for the accepted deviation $\varepsilon$.
Applying this metric across dozens of models then revealed two distinct classes with no intermediate cases: high-$\varepsilon$ models whose embeddings are not approximately orthogonal, and low-$\varepsilon$ models that maintain strict near-orthogonality.
\emph{In Section~\ref{sec:repr_cap}}, we showed that the standard Johnson--Lindenstrauss framework, designed for random projections, dramatically underestimates the packing efficiency of trained representations---predicting fewer than 300 directions for a model with a 32{,}000-token vocabulary.
By optimizing sets of vectors to mimic what gradient descent implicitly achieves, we derived an adjusted relationship in which capacity depends on the ratio $k/d$ rather than $k$ alone, reducing prediction error by two orders of magnitude with no additional free parameters; we then combined this with the per-model $\varepsilon$ estimates to define and compute \emph{representational capacity} as a quantitative upper bound on the distinguishable directions available in the latent space.
The resulting picture is one in which embeddings, unembeddings, and features draw from a shared, $\varepsilon$-bounded pool of near-orthogonal directions, and in which larger models tend to favor tighter orthogonality constraints over maximizing raw capacity---suggesting that representational stability may matter more than sheer geometric room at scale.

\paragraph{Limitations.}
Several caveats apply.
(i) The assumption that embedding geometry reflects the broader latent space is a motivated hypothesis, not a proven correspondence: embeddings are a fixed set of learned vectors, while features are dynamically activated directions, and layer normalization plus learned weights could impose different effective constraints over successive layers.
(ii) The $\mu + 2\sigma$ threshold for $\varepsilon$ is heuristic, and capacity is exponentially sensitive to $\varepsilon^2$---a shift from $\varepsilon = 0.06$ to $0.09$ moves estimated capacity by orders of magnitude---making absolute capacity values unreliable even though relative comparisons remain informative.
(iii) Architectural details we did not model (rotary positional encodings on QK subspaces, layer normalization rescaling) may impose distinct constraints on intermediate representations.
(iv) Capacity bounds geometric possibility, not realized utilization: the gap between how many directions \emph{can} exist and how many a given architecture can effectively learn and process remains unquantified.

\paragraph{Future work.}
Several directions follow naturally.
The most direct test of our central assumption would measure near-orthogonality in intermediate latent representations across inputs and layers, confirming or refuting whether embedding-derived $\varepsilon$ generalizes to the residual stream.
Correlating capacity against benchmark performance could reveal whether a ``representational scaling law'' exists, complementing existing scaling heuristics for choosing $\dm$.
A complete scaling theory will likely need to account for both representational capacity (how many features can be stored) and computational capacity (how many can be effectively processed by the network's depth, width, and attention budget).
Finally, the bifurcation into high- and low-$\varepsilon$ classes raises open questions: what architectural or training factors cause the divide, is there a critical scale at which models transition into superposition, and do Class~1 models leverage superposition internally despite their embeddings showing no evidence of it?

\paragraph{Implications for model design.}
Representational capacity currently serves as a diagnostic property of trained models, since $\varepsilon$ is not a directly controllable hyperparameter---it emerges from training dynamics and architectural choices such as embedding/unembedding tying.
The clustering of Class~2 $\varepsilon$ values below 0.09 and the observed tendency of larger-$\dm$ models to maintain tighter orthogonality together hint at a possible ceiling on useful capacity, beyond which additional geometric room provides diminishing returns.
If such a ceiling exists, $\dm$ could in principle be chosen so that the resulting capacity at $\varepsilon \lesssim 0.09$ approximately matches it, avoiding wasted dimensions; conversely, models that have not yet saturated this ceiling may benefit more from increased $\dm$ than from other forms of scaling.
Either possibility would convert the current heuristic of choosing $\dm$ in powers of two into a principled, capacity-targeted choice---though distinguishing a true ceiling from a stability requirement at scale will require controlled experiments that vary $\dm$ while holding other architectural choices fixed.

\paragraph{Closing remarks.}
We started from a simple question: given a model's latent dimension, how many features can it represent?
The answer depends not just on dimension but on how tightly the model constrains the overlap between its representations---a quantity that emerges from training rather than being set by design.
The framework here reframes $\dm$ as the determinant of a finite geometric resource that embeddings, unembeddings, and features all draw from, with the model's tolerance for overlap governing how far that resource stretches.
Whether this geometric perspective ultimately connects to model capability---whether capacity predicts what a model can learn, not just what it can store---remains the most compelling open question.

\nocite{deepseek,gemma,glm,gptoss,kimi,llama,minimax,mistral,phi,qwen,tinyllama}

\clearpage
\bibliographystyle{plainnat}
\bibliography{references}


\clearpage
\appendix

\section{Glossary}\label{sec:glossary}

The following glossary provides definitions for key terms and concepts used throughout this paper, particularly those relevant to the analysis of model dimension and representational capacity.

\bigskip

\begin{description}
    \item[Embeddings] \phantomsection\label{gloss:embeddings} The initial vector representations of tokens, produced by the embedding matrix. These are the starting points for the model's internal processing. Once they pass through the first decoder block, they are no longer considered embeddings and instead become latents.

    \item[Latents] \phantomsection\label{gloss:latents} The vector outputs of the decoder blocks, representing the evolving representations of tokens as they pass through the network. Unlike embeddings, which are static lookups from a learned matrix, latents are dynamically computed based on context.
    
    \textit{Note on terminology:} Related works often use the term ``activations'' interchangeably with what this paper refers to as latents. Here, ``activations'' specifically denotes the output vectors of individual neural layers (e.g., the query vectors resulting from $W_Q X$), whereas ``latents'' refers to the full residual stream representations after each decoder block.

    \item[Latent Space] \phantomsection\label{gloss:latent} The high-dimensional vector space where the model's internal representations (embeddings and latents) reside. The dimensionality of this space is determined by the model dimension $\dm$.

    \item[Feature/Concept] \phantomsection\label{gloss:feature} An interpretable unit of information or functionality within the model, such as a specific entity, grammatical rule, or abstract idea. Under the Linear Representation Hypothesis, features are represented as directions in the latent space.

    \item[Linear Representation Hypothesis] \phantomsection\label{gloss:lrh} A hypothesis purporting that neural language models represent concepts and features as linear directions in the latent space. Formally, for any concept $c$, there exists a direction vector $\vect{v}_c \in \R^{\dm}$ such that the activation or presence of concept $c$ in a latent vector $\vect{x} \in \R^{\dm}$ is given by the inner product $\langle \vect{x}, \vect{v}_c \rangle$.

    \item[Superposition Hypothesis] \phantomsection\label{gloss:superposition} A hypothesis proposing that neural networks leverage the properties of high-dimensional geometry---specifically near-orthogonality---to represent more concepts than the number of available dimensions. This allows the number of encoded concepts to grow exponentially with $\dm$.

    \item[Near-orthogonality] \phantomsection\label{gloss:nearortho} A geometric property describing a set of vectors that are approximately orthogonal to each other. Formally, a set of unit vectors $\mathcal{V} = \{\vect{v}_1, \dots, \vect{v}_k\} \subset \R^d$ is $\varepsilon$-nearly orthogonal if the magnitude of the inner product between any distinct pair is bounded by an accepted deviation $\varepsilon$, meaning $|\langle \vect{v}_i, \vect{v}_j \rangle| \leq \varepsilon$ for all $i \neq j$, given $0 < \varepsilon < 1$.

    \item[Johnson-Lindenstrauss (JL) Lemma] \phantomsection\label{gloss:jl} A fundamental result in high-dimensional geometry establishing that points in high-dimensional space can be embedded into lower dimensions while approximately preserving pairwise distances and inner products. Critically for the Superposition Hypothesis, the lemma's distance preservation guarantee implies that orthogonal vectors remain near-orthogonal after projection, enabling high-dimensional spaces to accommodate exponentially many near-orthogonal directions: $k \leq \exp(\frac{\varepsilon^2 d}{C})$ for some constant $C > 0$ (see full derivation in Appendix~\ref{sec:jl_lemma}).

    \item[Polysemanticity] \phantomsection\label{gloss:polysemanticity} The phenomenon observed in neural networks where individual neurons activate for multiple, distinct and often unrelated inputs or concepts (\cite{olah2020zoom}).

    \item[Representational Capacity] \phantomsection\label{gloss:repcap} A quantitative upper bound on the number of near-orthogonal directions available within a model's latent space, given its model dimension $\dm$ and the accepted deviation $\varepsilon$. Embeddings, unembeddings, and features all draw from this shared geometric resource.

    \item[Distributed Representations] \phantomsection\label{gloss:distributed} A representation paradigm where each feature is encoded as a pattern of activation across multiple basis dimensions, and each basis dimension participates in representing multiple features (\cite{bengio2014representationlearningreviewnew}).
    
    \textit{Note on terminology:} In the original distributed representations literature, the term ``feature'' referred to individual dimensions or neurons in the representation space. Under this paper's terminology---where ``feature'' is synonymous with ``concept'' (see Feature/Concept above)---distributed representations describe how each interpretable concept is spread across many basis dimensions, and conversely, each basis dimension contributes to encoding many concepts.

    \item[Leaning Toward] \phantomsection\label{gloss:leaning} A geometric relationship describing when a vector has a higher-than-expected projection onto a particular direction, where the expectation is set by the accepted deviation $\varepsilon$ for near-orthogonality. When a vector $\vect{v}$ leans toward a direction $\vect{d}$, the inner product $\langle \vect{v}, \vect{d} \rangle > \varepsilon$, indicating meaningful alignment rather than incidental similarity. This concept is central to understanding how embeddings encode information about features while remaining near-orthogonal to unrelated directions.
\end{description}

\clearpage
\section{Johnson-Lindenstrauss Lemma}\label{sec:jl_lemma}

\begin{theorem}[Johnson-Lindenstrauss Lemma, Angle Form]
Let $0 < \delta < 1$ and let $X = \{\vect{x}_1,\dots,\vect{x}_k\}$ be a set of unit vectors
in a (possibly infinite-dimensional) Hilbert space.
There exists a linear map
\[
f : \R^N \to \R^d
\quad\text{with}\quad
d = O(\delta^{-2}\log k),
\]
such that for all $i,j$,
\[
\bigl|\langle f(\vect{x}_i), f(\vect{x}_j)\rangle - \langle \vect{x}_i, \vect{x}_j\rangle\bigr|
\le 3\delta.
\]
In particular, if $\vect{x}_i \perp \vect{x}_j$, then
\[
|\langle f(\vect{x}_i), f(\vect{x}_j)\rangle| \le 2\delta,
\]
so the images are near-orthogonal.
\end{theorem}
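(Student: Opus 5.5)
We derive the angle form from the standard distance-preserving Johnson--Lindenstrauss lemma by polarization. First reduce to finite dimension. The $k$ vectors span a subspace of dimension at most $k$, so we may identify it isometrically with $\R^N$ for some $N \le k$. This explains the domain $\R^N$ in the statement.

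Next apply the distance form to the augmented point set
\[
P = \{0\} \cup \{\vect{x}_i\} \cup \{\vect{x}_i - \vect{x}_j : i<j\}.
\]
Equivalently, we only need the $k$ points together with the origin, since the map is linear. There are $k+1 = O(k)$ points, so $d = O(\delta^{-2}\log k)$ dimensions suffice. Concretely, let $f = \frac{1}{\sqrt d} G$ with $G$ a $d\times N$ matrix of i.i.d.\ standard Gaussians. For a fixed vector $\vect{u}$, the quantity $\|f\vect{u}\|^2/\|\vect{u}\|^2$ is distributed as $\chi^2_d/d$. The standard Chernoff bound for $\chi^2$ gives
\[
\Pr\bigl[\,\bigl|\|f\vect{u}\|^2-\|\vect{u}\|^2\bigr| > \delta\|\vect{u}\|^2\,\bigr] \le 2e^{-c\delta^2 d}.
\]
We take a union bound over the $O(k^2)$ vectors $\vect{x}_i$ and $\vect{x}_i-\vect{x}_j$. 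Choosing $d \ge C\delta^{-2}\ln k$ makes the failure probability less than $1$, so a good $f$ exists.

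On this good event we have
\[
(1-\delta)\|\vect{u}\|^2 \le \|f\vect{u}\|^2 \le (1+\delta)\|\vect{u}\|^2
\]
for every such $\vect{u}$. Now polarize:
\[
\langle f\vect{x}_i,f\vect{x}_j\rangle = \tfrac12\bigl(\|f\vect{x}_i\|^2+\|f\vect{x}_j\|^2-\|f(\vect{x}_i-\vect{x}_j)\|^2\bigr).
\]
Subtract the same identity for the original vectors. Each of the three squared-norm terms has error at most $\delta$ times its true value. These true values are $1$, $1$, and $\|\vect{x}_i-\vect{x}_j\|^2\le 4$, so the total error is at most $\tfrac12(\delta+\delta+4\delta)=3\delta$. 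This is exactly the claimed bound.

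For the orthogonal special case, $\|\vect{x}_i-\vect{x}_j\|^2=2$. The error then becomes $\tfrac12(\delta+\delta+2\delta)=2\delta$, which gives $|\langle f\vect{x}_i,f\vect{x}_j\rangle|\le 2\delta$.

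The main obstacle is the concentration inequality for $\chi^2_d/d$. The sharp two-sided tail $e^{-c\delta^2 d}$ for $0<\delta<1$ is what produces the $\delta^{-2}$ dependence. I would prove it via the moment generating function $\mathbb{E}e^{\lambda g^2}=(1-2\lambda)^{-1/2}$ and Markov's inequality, optimizing $\lambda$ separately for the upper tail and the lower tail. In each case the bound is $\exp(-d(\delta^2/4-\delta^3/6))$, which is at most $e^{-d\delta^2/12}$ for $\delta<1$. Everything else is bookkeeping: the union bound and the polarization arithmetic.
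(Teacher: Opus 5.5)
Your proposal is correct and follows essentially the same route as the paper. Both arguments apply the distance-form JL lemma to $X \cup \{0\}$, which by linearity amounts to norm preservation for the $x_i$ and the $x_i - x_j$, and then polarize; your termwise count $\tfrac12(\delta+\delta+\delta\|x_i-x_j\|^2) = 2\delta - \delta\langle x_i,x_j\rangle \le 3\delta$ matches the two-sided bounds in the paper's Step 4. The only difference is that you also prove the distance form itself (Gaussian projection, the $\chi^2_d$ moment-generating-function tail bound, and a union bound over $O(k^2)$ vectors), whereas the paper simply invokes that lemma.
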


\medskip

\begin{proof}
We begin with the standard Johnson-Lindenstrauss lemma.
There exists a linear map $f$ such that for all $\vect{u},\vect{v} \in X \cup \{\vect{0}\}$,
\begin{equation}\label{eq:jl}
(1-\delta)\|\vect{u}-\vect{v}\|^2 \le \|f(\vect{u})-f(\vect{v})\|^2 \le (1+\delta)\|\vect{u}-\vect{v}\|^2.
\end{equation}

\medskip

\noindent
\textbf{Step 1: Norm preservation.}
Setting $\vect{v} = \vect{0}$ in \eqref{eq:jl} yields
\[
(1-\delta)\|\vect{u}\|^2 \le \|f(\vect{u})\|^2 \le (1+\delta)\|\vect{u}\|^2.
\]
Since each $\vect{x}_i$ is a unit vector,
\begin{equation}\label{eq:norm}
1-\delta \le \|f(\vect{x}_i)\|^2 \le 1+\delta.
\end{equation}

\medskip

\noindent
\textbf{Step 2: Distance between two unit vectors.}
For any unit vectors $\vect{u},\vect{v}$,
\[
\|\vect{u}-\vect{v}\|^2 = \|\vect{u}\|^2 + \|\vect{v}\|^2 - 2\langle \vect{u},\vect{v}\rangle
= 2 - 2\langle \vect{u},\vect{v}\rangle.
\]
Applying \eqref{eq:jl},
\begin{equation}\label{eq:dist}
2(1-\delta)(1-\langle \vect{u},\vect{v}\rangle)
\le \|f(\vect{u})-f(\vect{v})\|^2
\le 2(1+\delta)(1-\langle \vect{u},\vect{v}\rangle).
\end{equation}

\medskip

\noindent
\textbf{Step 3: Recover inner products using polarization.}
For any vectors $\vect{a},\vect{b}$,
\[
\langle \vect{a},\vect{b}\rangle
= \frac{\|\vect{a}\|^2 + \|\vect{b}\|^2 - \|\vect{a}-\vect{b}\|^2}{2}.
\]
Applying this to $f(\vect{u}), f(\vect{v})$,
\begin{equation}\label{eq:polar}
\langle f(\vect{u}),f(\vect{v})\rangle
= \frac{\|f(\vect{u})\|^2 + \|f(\vect{v})\|^2 - \|f(\vect{u})-f(\vect{v})\|^2}{2}.
\end{equation}

\medskip

\noindent
\textbf{Step 4: Bounding the inner product distortion.}
Using \eqref{eq:norm} and the upper bound from \eqref{eq:dist},
\begin{align*}
\langle f(\vect{u}),f(\vect{v})\rangle
&\ge \frac{2(1-\delta) - 2(1+\delta)(1-\langle \vect{u},\vect{v}\rangle)}{2}\\
&= (1-\delta) - (1+\delta) + (1+\delta)\langle \vect{u},\vect{v}\rangle\\
&= (1+\delta)\langle \vect{u},\vect{v}\rangle - 2\delta.
\end{align*}
Similarly, using the lower bound from \eqref{eq:dist},
\begin{align*}
\langle f(\vect{u}),f(\vect{v})\rangle
&\le \frac{2(1+\delta) - 2(1-\delta)(1-\langle \vect{u},\vect{v}\rangle)}{2}\\
&= (1+\delta) - (1-\delta) + (1-\delta)\langle \vect{u},\vect{v}\rangle\\
&= (1-\delta)\langle \vect{u},\vect{v}\rangle + 2\delta.
\end{align*}
Thus,
\[
|\langle f(\vect{u}),f(\vect{v})\rangle - \langle \vect{u},\vect{v}\rangle|
\le 2\delta + \delta|\langle \vect{u},\vect{v}\rangle|
\le 3\delta,
\]
where the last inequality uses $|\langle \vect{u},\vect{v}\rangle| \le 1$ for unit vectors.

\medskip

\noindent
\textbf{Step 5: Near-orthogonality.}
If $\vect{u} \perp \vect{v}$, then $\langle \vect{u},\vect{v}\rangle = 0$, and the bounds in Step~4 give
\[
|\langle f(\vect{u}),f(\vect{v})\rangle| \le 2\delta.
\]
Hence the images are near-orthogonal.

\medskip

\noindent
\textbf{Step 6: Angles.}
Since $\|f(\vect{u})\|,\|f(\vect{v})\| \in [\sqrt{1-\delta},\sqrt{1+\delta}]$,
the cosine of the angle $\theta'$ between $f(\vect{u})$ and $f(\vect{v})$ satisfies
\[
|\cos\theta'|
= \frac{|\langle f(\vect{u}),f(\vect{v})\rangle|}{\|f(\vect{u})\|\|f(\vect{v})\|}
\le \frac{3\delta}{1-\delta}
= O(\delta).
\]

This completes the proof.
\end{proof}

\bigskip

\begin{remark}
The dimension requirement $d = O(\delta^{-2}\log k)$ can be written as $d \ge C' \cdot \frac{\ln k}{\delta^2}$ for some constant $C' > 0$.
For orthogonal vectors, Step~5 shows the inner product deviation is bounded by $2\delta$.
Setting $\varepsilon = 2\delta$---where $\varepsilon$ is the accepted deviation on near-orthogonality used throughout this paper---gives
\[
d \ge C \cdot \frac{\ln k}{\varepsilon^2}
\]
for an adjusted constant $C$ that absorbs the factor of 4.
Rearranging yields
\[
k \le \exp\!\left(\frac{\varepsilon^2 d}{C}\right),
\]
which bounds the number of $\varepsilon$-near-orthogonal unit vectors that a random projection can produce in $\R^d$.
This exponential growth in $k$ with respect to $d$ is central to the Superposition Hypothesis.

The value of $C$ depends on the specific probabilistic construction used; Section~\ref{sec:repr_cap} discusses both the theoretical value $C = 8$ from the literature and an empirically fitted value for optimized vectors.
\end{remark}

\clearpage
\section{Embedding and Unembedding Relationship}\label{sec:embd_unembd}

This appendix examines the relationship between embedding and unembedding matrices, which informs the interpretation of embedding-based $\varepsilon$ estimates in Section~\ref{sec:embeddings} and motivates the qualification noted there for tied models.

At the output layer, the final latent representations are converted to logits via the unembedding matrix $\mat{U} \in \R^{V \times \dm}$, with the logit for token $i$ given by the inner product $\langle \vect{h}, \vect{u}_i \rangle$ between the final latent $\vect{h}$ and the corresponding row $\vect{u}_i$.
In \emph{untied} models, $\mat{E}$ and $\mat{U}$ are learned independently and may differ significantly; in \emph{tied} models the same matrix serves both roles.

\begin{figure}[htbp]
    \centering
    \includegraphics[width=0.9\textwidth]{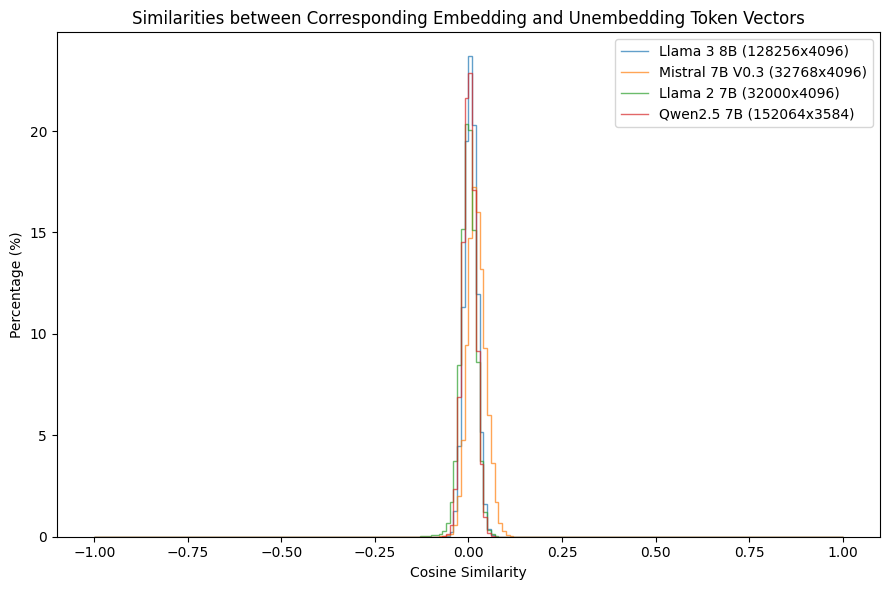}
    \caption{Cosine similarity between corresponding token embeddings and unembeddings for several models with untied matrices.}
    \label{fig:embd_vs_unembd}
\end{figure}

Comparing the embedding and unembedding vectors for each token in untied models (Figure~\ref{fig:embd_vs_unembd}) reveals that they are largely orthogonal: the model transforms representations from an input embedding space into a distinct output unembedding space over the course of the residual stream.
The similarity distributions of embeddings (Section~\ref{sec:embeddings}) and unembeddings (Figure~\ref{fig:unembedding_similarities}) also differ noticeably, likely reflecting the different roles these matrices play---embeddings provide a useful starting representation for contextualization, while unembeddings must support accurate next-token prediction.

\begin{figure}[htbp]
    \centering
    \includegraphics[width=0.9\textwidth]{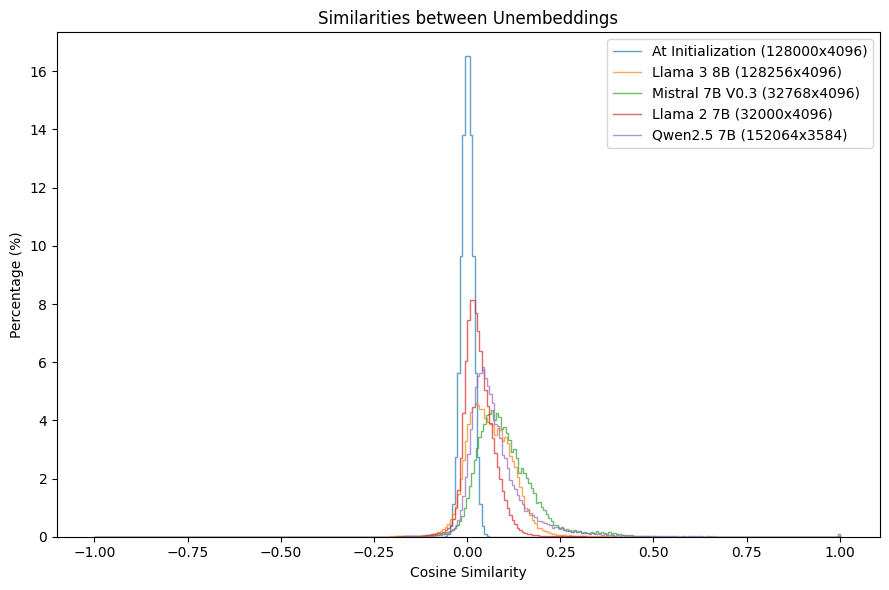}
    \caption{Distribution of cosine similarity between each token unembedding and all others.}
    \label{fig:unembedding_similarities}
\end{figure}

In tied models the embedding similarity distributions follow the unembedding pattern, since the two are the same matrix.
This is relevant to the two-class split in Section~\ref{sec:embeddings}: nearly all Class~1 (high-$\varepsilon$) models are tied, suggesting that forcing one matrix to satisfy both objectives discourages the near-orthogonal structure that untied embeddings exhibit.
Tying does not fully account for the divide---a few tied models maintain low $\varepsilon$ and some untied ones do not---so the structural cause remains open.

\clearpage
\section{Models Analyzed}\label{sec:models_analyzed}

Per-model values of $\dm$, the estimated accepted deviation $\varepsilon$ ($\mu + 2\sigma$ of the pairwise embedding cosine similarity distribution), and the resulting representational capacity computed via the fully parameterized form of \eqref{eq:new_relationship} (Section~\ref{sec:optimizing}).
Class~1 models ($\varepsilon > 0.2$) are marked indeterminate: their embeddings lack near-orthogonal structure, so the bound is vacuous (cf.\ Section~\ref{sec:capacity_classes}).
Across the Class~2 models, the bulk of capacities top out near $\sim 10^{10}$--$10^{11}$; the lone exception is Llama~2~70B, whose $\varepsilon = 0.09962$ sits at the very edge of the Class~1 boundary and lands its capacity at $\sim 10^{15}$.

\begin{longtable}{l r r r}
    \caption{Per-model $\dm$, accepted deviation $\varepsilon$, and representational capacity.}
    \label{tab:model_repr_cap} \\
    \hline
    \textbf{Model Name} & \textbf{$\dm$} & \textbf{$\varepsilon$} & \textbf{Rep. Capacity} \\
    \hline
    \endfirsthead

    \hline
    \textbf{Model Name} & \textbf{$\dm$} & \textbf{$\varepsilon$} & \textbf{Rep. Capacity} \\
    \hline
    \endhead

    \hline
    \endfoot

    \endlastfoot

    DeepSeek V3.2 Exp & 7168 & 0.05917 & $1.16\mathrm{e}{9}$ \\
    DeepSeek R1 & 7168 & 0.05917 & $1.16\mathrm{e}{9}$ \\
    \hline
    Gemma 7B & 3072 & 0.99918 & indeterminate \\
    Gemma 2 2B & 2304 & 0.35084 & indeterminate \\
    Gemma 2 9B & 3584 & 0.39785 & indeterminate \\
    Gemma 2 27B & 4608 & 0.09013 & $4.77\mathrm{e}{10}$ \\
    Gemma 3 270M & 640 & 0.41554 & indeterminate \\
    Gemma 3 1B & 1152 & 0.20164 & indeterminate \\
    Gemma 3 12B & 3840 & 0.08635 & $2.51\mathrm{e}{9}$ \\
    Gemma 3 27B & 5376 & 0.07421 & $4.36\mathrm{e}{9}$ \\
    \hline
    GLM 4.6 & 5120 & 0.05882 & $6.14\mathrm{e}{7}$ \\
    \hline
    GPT OSS 120B & 2880 & 0.22984 & indeterminate \\
    GPT OSS 20B & 2880 & 0.28120 & indeterminate \\
    \hline
    Kimi K2 & 7168 & 0.06763 & $1.47\mathrm{e}{10}$ \\
    \hline
    Llama 2 7B & 4096 & 0.06450 & $3.99\mathrm{e}{7}$ \\
    Llama 2 13B & 5120 & 0.06766 & $5.11\mathrm{e}{8}$ \\
    Llama 2 70B & 8192 & 0.09962 & $8.15\mathrm{e}{15}$ \\
    Llama 3 8B & 4096 & 0.05945 & $1.42\mathrm{e}{7}$ \\
    Llama 3.1 8B & 4096 & 0.05542 & $6.37\mathrm{e}{6}$ \\
    Llama 3.1 70B & 8192 & 0.04413 & $4.39\mathrm{e}{7}$ \\
    Llama 3.1 405B & 16384 & 0.04813 & $1.22\mathrm{e}{11}$ \\
    Llama 3.2 1B & 2048 & 0.28223 & indeterminate \\
    Llama 3.2 3B & 3072 & 0.29395 & indeterminate \\
    \hline
    MiniMax M2 & 3072 & 0.07618 & $4.39\mathrm{e}{7}$ \\
    \hline
    Mistral Small 3.2 24B & 5120 & 0.07520 & $3.39\mathrm{e}{9}$ \\
    Mistral 7B Instruct v0.3 & 4096 & 0.07019 & $1.33\mathrm{e}{8}$ \\
    \hline
    Phi 2 & 2560 & 0.05497 & $4.57\mathrm{e}{5}$ \\
    Phi 3 mini 128k & 3072 & 0.05536 & $1.21\mathrm{e}{6}$ \\
    Phi 4 & 5120 & 0.05865 & $5.91\mathrm{e}{7}$ \\
    \hline
    Qwen 2.5 0.5B & 896 & 0.36035 & indeterminate \\
    Qwen 2.5 1.5B & 1536 & 0.25998 & indeterminate \\
    Qwen 2.5 3B & 2048 & 0.21014 & indeterminate \\
    Qwen 2.5 7B & 3584 & 0.07501 & $1.20\mathrm{e}{8}$ \\
    Qwen 2.5 14B & 5120 & 0.05269 & $1.51\mathrm{e}{7}$ \\
    Qwen 2.5 32B & 5120 & 0.05554 & $2.88\mathrm{e}{7}$ \\
    Qwen 2.5 72B & 8192 & 0.06131 & $8.46\mathrm{e}{9}$ \\
    Qwen 3 0.6B & 1024 & 0.25204 & indeterminate \\
    Qwen 3 4B & 2560 & 0.20834 & indeterminate \\
    Qwen 3 8B & 4096 & 0.05969 & $1.49\mathrm{e}{7}$ \\
    Qwen 3 32B & 5120 & 0.07263 & $1.77\mathrm{e}{9}$ \\
    Qwen 3 30B A3B & 2048 & 0.06456 & $5.67\mathrm{e}{5}$ \\
    Qwen 3 Next 80B A3B & 2048 & 0.07422 & $2.07\mathrm{e}{6}$ \\
    Qwen 3 235B A22B & 4096 & 0.04874 & $1.77\mathrm{e}{6}$ \\
    Qwen 3 Coder 480B & 6144 & 0.04660 & $1.21\mathrm{e}{7}$ \\
    \hline
    TinyLlama 1.1B & 2048 & 0.06329 & $4.81\mathrm{e}{5}$ \\
    \hline
\end{longtable}

\end{document}